\newtheorem{theorem}{Theorem}
\newtheorem{lemma}{Lemma}
\newtheorem{proposition}{Proposition}
\newcommand{\tr}{T}
\newcommand{\indc}{\delta}
\newcommand{\Ad}{m}
\newcommand{\Cr}{C_r}
\newcommand{\bE}{\mathbb{E}}
\newcommand{\bR}{\mathbb{R}}
\newcommand{\xS}{\mu}
\newcommand{\sign}{\textnormal{sign}}
\newcommand{\Sr}{S^r}
\newcommand{\xr}{x^r}
\renewcommand{\Pr}{\mathbb{P}}
\title{\LARGE \bf
Online Learning with Adversaries: A Differential-Inclusion Analysis}
\author{Swetha Ganesh, Alexandre Reiffers-Masson, Gugan Thoppe
\thanks{S. Ganesh and G. Thoppe are with the Computer Science and Automation Department, Indian Institute of Science (IISc), Bengaluru 560012, India. SG's research was supported by the Prime Minister's Research Fellowship (PMRF). GT's research was supported in part by DST-SERB’s Core Research Grant CRG/2021/00833, in part by DST-SERB's SIRE Fellowship SIR/2022/000444, in part by IISc Startup grants SG/MHRD-19-0054 and SR/MHRD-19-0040, and in part by the Pratiksha Trust Young Investigator     award. GT also wishes to thank Michel Benaim for several insightful discussions related to this work during his visit to the University of Neuchatel, Switzerland.        {\tt\small Emails: swethaganesh@iisc.ac.in,  gthoppe@iisc.ac.in}}
\thanks{A. Reiffers-Masson is with the Computer Science Department, IMT Atlantique, 655 Av. du Technopôle, 29280 Plouzan\'e, France. {\tt\small alexandre.reiffers-masson@imt-atlantique.fr}}%
}
\begin{document}

\maketitle
\thispagestyle{empty}
\pagestyle{empty}

\begin{abstract}
We introduce an observation-matrix-based framework for fully asynchronous online Federated Learning (FL) with adversaries. In this work, we demonstrate its effectiveness in estimating the mean of a random vector. Our main result is that the proposed algorithm almost surely converges to the desired mean $\mu.$ This makes ours the first asynchronous FL method to have an a.s. convergence guarantee in the presence of adversaries. We derive this convergence using a novel differential-inclusion-based two-timescale analysis. Two other highlights of our proof include (a) the use of a novel Lyapunov function to show that $\mu$ is the unique global attractor for our algorithm's limiting dynamics, and (b) the use of martingale and stopping-time theory to show that our algorithm's iterates are almost surely bounded. 
\end{abstract}

\section{Introduction}
Federated Learning (FL) \cite{mcmahan2017communication} is a paradigm for multiple edge/client nodes to collaborate and iteratively solve some global problem with the help of a central server. It has therefore garnered significant interest in machine \cite{zhang2021survey} and reinforcement learning \cite{qi2021federated}. However, most existing FL methods do not account for failures or adversarial clients, making them ineffective in practice. Also, among those that do, a majority are synchronous \cite{bernstein2018signsgd, data2021byzantine, gorbunov2022variance, jin2020stochastic, wu2020federated, yin2018byzantine}: the server waits for inputs from a large number of clients before updating the global estimate.
These approaches again are impractical because many edge devices are frequently offline and, when that is not the case, the (inevitable) slow devices decide the overall performance. These issues have put the focus on asynchronous FL, wherein the server updates as soon as one node provides its input. Our work introduces a radically new family of such FL methods.

To our knowledge, there exist only four asynchronous FL methods in the literature: (i.) Kardam \cite{damaskinos2018asynchronous}, (ii.) Zeno++ \cite{xie2020zeno++},  (iii.) AFLGuard \cite{fang2022aflguard}, and (iv.) BASGD \cite{yang2021basgd}. The first three use a sophisticated scoring rule for filtering out malicious estimates. However, Kardam’s issue is that it drops many correct estimates during attacks. On the other hand, Zeno++ and AFLGuard require a separate validation dataset at the parameter server, which is undesirable from the privacy viewpoint. Finally, BASGD is asynchronous only at the client side: the time between successive server updates is dictated by the stragglers, like in synchronous FL methods. Our proposed approach has none of the above issues. 

We only consider the mean estimation FL problem here and discuss our proposed approach only  in that context. This problem involves $p \geq 1$ clients (a small but unknown subset of which are malicious), whose joint goal is to estimate the mean $\mu$ of a random vector $X \in \bR^d,$ where $d \leq p.$ At its core, our approach considers  a tall observation matrix $A \in \bR^{p \times d}$ and provides each node $i$ access to the IID samples of the random variable $Y(i) := a_i^\tr X,$ where $\tr$ denotes transpose and $a_i^\tr$ is the $i$-th row of $A.$ We task node $i$ to locally estimate the mean of $Y(i).$ Separately, at every instance $n \geq 0,$ the server is  tasked to pick a client at random and request for its current local estimate. A honest client is expected to provide its actual estimate; the malicious agent can act arbitrarily (it can even collude with other attackers). The server then is to immediately update its $\mu$-estimate using the gradient of $|a_i^\tr x - y_n(i)|,$ i.e., using one update step of the SGD algorithm that solves $\min_x \|Ax - \bE[Y]\|_1,$ where $\|\cdot\|_1$ is the $\ell_1$ norm. Clearly, the above algorithm is asynchronous since the server updates the $\mu$-estimate immediately upon receiving an input from a client.

The basis for our above approach is as follows. Suppose a matrix $A$ and the vector $b = A x_*$ are known, but not the vector $x_*.$ Then a natural way to recover $x_*$ is to solve the linear system $Ax = b.$ In \cite{fawzi2011secure}, a variant of this problem is discussed. There,  the goal is again to recover $x_*$ but assuming knowledge of only the vector $b'= b + e$ instead of $b.$ The vector $e$ is presumed sparse and represents a one-time malicious attack. Due to $e$’s sparsity, solving for $x$ that minimizes the $\ell_1$-norm of $Ax - b'$ is now the natural way to recover $x_*.$ A key result in \cite{fawzi2011secure} is that the observation matrix $A$ being robust (see \eqref{e:FTD.Cond} in our work), i.e., has suitable redundancy depending on $e$’s sparsity, is the necessary and sufficient condition for $x_*$ to be the unique solution to this $\ell_1$-minimization problem. The vector corresponding to $b$ in our setup is $\bE[Y].$ While $b'$ provides the value of $b$ in the non-attacked (but unknown) coordinates, $\bE[Y]$ is fully unknown in our case. Thus, our proposed approach above can be seen as a modification of the one in \cite{fawzi2011secure} that obtains online estimates of both $\bE[Y]$ and $\mu$ simultaneously. Note that the malicious agents in \cite{fawzi2011secure} attack only once. In contrast, in our case, since the server (unknowingly) will query every malicious node infinitely often during the algorithm's run, each such node  will have infinitely many opportunities to poison the $\mu$-estimation update rule.
 
Our main contributions can be summarized as follows.

\begin{enumerate}
    \item \textbf{Algorithm}: We propose a novel fully asynchronous FL algorithm and  demonstrate its effectiveness for mean estimation. Unlike the sophisticated filtering scheme or the non-corrupt private dataset of Kardam, Zeno++, and AFLGuard, our approach uses a observation matrix to handle adversaries. This matrix choice is not unique; thus, we have a family of algorithms for solving the same problem. Separately, since the gradient of the $\|\cdot\|_1$-error involves a sign function, our algorithm ensures that the impact of an adversarial node in each iteration is limited to a sign change.

    \item \textbf{Result}: Our main result is that $A$ being robust (as in \cite{fawzi2011secure}) is again a necessary and sufficient condition for our algorithm's iterates to converge to $\mu$ almost surely. This makes our proposed approach the first asynchronous FL algorithm to have an a.s. convergence guarantee in the presence of adversaries. 

    \item \textbf{Analysis}: Our analysis is novel compared to the existing FL literature. It builds on the Differential Inclusion (DI) and the two-timescale stochastic-approximation theory. The two-timescale part arises because our approach estimates $\bE[Y]$ and $\mu$ using two stepsize sequences that decay to $0$ at different rates. In contrast, we use a DI---a set-valued generalization of an Ordinary Differential Equation (ODE)---to mainly account for the multitude of choices available to an adversarial client in each iteration. DI theory is being used for the first time for analyzing an  algorithm in adversarial settings. There are two additional highlights of our proof.
    %


    %
    \begin{enumerate}
        \item \textit{Lyapunov Function}: Our algorithm is based on the gradient-descent idea for minimizing $\|Ax - \bE[Y]\|_1.$ Typically, for analyzing such a method, the natural Lyapunov function would have been the objective function. However, in our adversarial setting, we have been unable to verify this claim. We instead prove that $\|x - \mu\|_2^2$ behaves as a Lyapunov function.

        \item \textit{Boundedness of Iterates}: A key step in any ODE/DI based analysis \cite{borkar2009stochastic} of stochastic algorithms is to show that the algorithm's iterates are stable. In this work, we use a novel martingale and stopping time based approach to show that the algorithm's iterates are almost surely bounded. 
    \end{enumerate}

    
    %

    %
        
    
    
\end{enumerate}

\section{Setup, Algorithm, and Main Result}

We describe here the statistical problem we study, our proposed algorithm to solve it, and our main result that describes the limiting behavior of this algorithm.


\textbf{Setup}: $X \in \bR^d$ is a random variable with finite mean and finite covariance matrix entries. There are $p$ agents to collect statistics about $X,$ but an unknown subset $M,$ with $|M| \leq \Ad,$ are malicious or adversarial. Specifically, the $i$-th agent has access to samples of the random variable $Y(i) := a_i^\tr X,$ where $a_i \in \bR^d$ is a known deterministic vector. At time $n \geq 1,$ a central server picks index $i_n$  uniformly at random from $\{1, \ldots, p\}$ and queries agent $i_n$ for an independent sample of $Y(i_n).$ Agent $i_n$ returns an actual sample if it is non-adversarial, and an arbitrary real number otherwise (the value can change on each  query and can depend on the history\footnote{Such adversaries are commonly referred to as omniscient. 
}).  In either case, $Y_n(i_n)$ denotes the obtained sample. 

\textbf{Goal}: Develop an online algorithm to estimate $\mu := \bE[X]$ using the sequence $(Y_n(i_n)).$ 

\textbf{Algorithm}: Our approach is based on the gradient descent idea for minimizing $\|Ax - \bE[Y]\|_1.$ Starting from an arbitrary $x_0 \in \bR^d$ and $y_0 \in \bR^p,$ our proposed algorithm to learn $\mu$ at the central server is, for $n \geq 0,$
\begin{equation}
\label{e:OL.Adversarial}
    \begin{aligned}
        x_{n + 1} = {} & x_n + \alpha_n a_{i_{n + 1}} [\sign(y_n(i_{n + 1}) - a_{i_{n + 1}}^\tr x_n)] \\
        y_{n + 1} = {} & y_n + \beta_n[Y_{n + 1}(i_{n + 1}) - y_n(i_{n + 1})] u_{i_{n + 1}},
    \end{aligned}
\end{equation}
where $u_{i}$ is $i$-th column of the $p \times p$-identity matrix and, for any $r \in \bR,$
\begin{equation}
    \sign(r) =
    \begin{cases}
        - 1 & \text{ if $r < 0,$} \\
          0 & \text{ if $r = 0,$} \\
          1 & \text{ if $r > 0.$}
    \end{cases}
\end{equation}
%
In \eqref{e:OL.Adversarial}, the variables indexed by $n$ are known at time $n,$ while the ones by $n + 1$ are not. Note that the coordinates of $y_n$ corresponding to malicious nodes are directly fed into $x_n$'s update rule. 

\textbf{Assumptions}: Apart from the conditions on $X, (i_n),$ and $Y_n(i_n)$ stated in the setup, we presume that the matrix $A$ and stepsize sequences $(\alpha_n)$ and $(\beta_n)$ satisfy the following.
\begin{enumerate}[leftmargin=*]
    \item \textbf{Observation matrix}: The matrix $A$ is tall ($p > d$), has full column rank, and satisfies 
    \begin{equation}
        \label{e:FTD.Cond}
        \sum_{i \in K^c} |a_i ^\tr x| > \sum_{i \in K} |a_i^\tr x|
    \end{equation}
    for all $x \in \bR^d \setminus \{0\}$ and $K \subseteq \{1, \ldots, p\}$ with $|K| = m.$ 
    
    \item \textbf{Stepsize}: $(\alpha_n)$ and $(\beta_n)$ are monotonically decreasing positive reals such that $\max \{\alpha_0, \beta_0\} \leq 1,$ $\sum_{n \geq 0} \alpha_n = \sum_{n \geq 0}\beta_n = \infty,$ $\lim_{n \to \infty} \alpha_n/\beta_n = \lim_{n \to \infty}\beta_n = 0,$ and $\max\{\sum_{n \geq 0} \alpha_n^2, \sum_{n \geq 0} \beta_n^2, \sum_{n \geq 0} \alpha_n \gamma_n\}  < \infty,$ where $\gamma_n = \sqrt{\beta_n \ln (\sum_{k = 0}^n \beta_k)}.$ An example is $\alpha_n = n^{-\alpha},$ $\alpha \in (2/3, 1],$ and $\beta_n = n^{-\beta},$ $\beta \in (1/2, 1] \cap (2(1 - \alpha), \alpha).$

\end{enumerate}

Our main result is stated below and is derived using a DI-based set-valued analysis. As we discuss in Section~\ref{s:Intuition}, such an analysis is natural for \eqref{e:OL.Adversarial} due to its sub-gradient nature  and, importantly, the presence of adversaries. Let  $h: \bR^d \to 2^{\bR^d}$ (the power set of $\bR^d$) be given by 
\begin{equation}
    \label{e:x.Driving.Function}
    h(x) = \left\{\frac{1}{p}\sum_{i = 1}^p  a_i \lambda_i: (\lambda_1, \ldots, \lambda_p) \in \Lambda(x)\right\},
\end{equation}
where $\Lambda(x)$ includes all  $(\lambda_1, \ldots, \lambda_p)$ for which 
\[
    \lambda_i \in 
    \begin{cases}
    \{\sign(\bE [Y(i)]  - a_i^\tr x)\}, & i \in M^c\!  \text{ and }\! a_i^\tr x\! \neq \!\bE[Y(i)], \\
    [-1, +1],  & \text{otherwise.}
    \end{cases}
\]
\begin{theorem}
\label{thm:SRI.Main.Result}
The following statements hold.
    \begin{enumerate}
        \item \label{st:GASE} $\mu$ is the unique Globally Asymptotically Stable Equilibrium (GASE) for the DI
        \begin{equation}
        \label{e:lim.DI}
            \dot{x}(t) \in h(x(t)).
        \end{equation}
    
        \item \label{st:y_n.rate} There exists some constant $\Lambda > 0$ such that
        \[
            \limsup_{n \to \infty}     \frac{\|y_n - \bE[Y]\|_{M^c}}{\gamma_n} \leq \Lambda \qquad \text{a.s.,}
        \]
        where $\|y\|_{M^c} = \sqrt{\sum_{i \in M^c}y^2(i)}.$

            
    
        \item \label{st:xn.Conv} $x_n \to \mu$ a.s.
    \end{enumerate}
\end{theorem}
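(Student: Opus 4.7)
The plan is to establish the three claims in turn and then address the single remaining technical step that underlies the combined argument.

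\textbf{Statement~\ref{st:GASE}.} I would take $V(x) = \|x - \mu\|_2^2$ as the Lyapunov function for~\eqref{e:lim.DI}. For any absolutely continuous solution $x(\cdot)$ with derivative $\dot x(t) = (1/p)\sum_i a_i \lambda_i(t) \in h(x(t))$, one has
\[
\dot V(x(t)) = \frac{2}{p}\sum_{i=1}^p \lambda_i(t)\, a_i^\tr(x(t)-\mu).
\]
Using $\bE[Y(i)] = a_i^\tr\mu$: for each $i \in M^c$ with $a_i^\tr x(t) \ne a_i^\tr\mu$, the definition of $\Lambda$ forces $\lambda_i\, a_i^\tr(x(t)-\mu) = -|a_i^\tr(x(t)-\mu)|$; for every other $i$, $\lambda_i\, a_i^\tr(x(t)-\mu) \leq |a_i^\tr(x(t)-\mu)|$. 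Hence
\[
\dot V(x(t)) \leq \frac{2}{p}\Bigl[\sum_{i \in M}|a_i^\tr(x(t)-\mu)| - \sum_{i \in M^c}|a_i^\tr(x(t)-\mu)|\Bigr].
\]
Extending $M$ (if $|M|<m$) to a size-$m$ superset and applying \eqref{e:FTD.Cond} with $z=x(t)-\mu$ makes the right-hand side strictly negative whenever $x(t)\ne\mu$; a compactness argument on level sets of $V$ upgrades this to uniform decay and yields the GASE property.

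\textbf{Statement~\ref{st:y_n.rate}.} For honest $j\in M^c$, $y_n(j)$ is an asynchronous Robbins-Monro estimate of $\bE[Y(j)]$: writing
\[
y_{n+1}(j)-\bE[Y(j)] = (1-\beta_n\mathbf{1}\{i_{n+1}=j\})(y_n(j)-\bE[Y(j)]) + \beta_n\mathbf{1}\{i_{n+1}=j\}\,\xi_{n+1}(j),
\]
with $\xi_{n+1}(j):=Y_{n+1}(j)-\bE[Y(j)]$ a square-integrable martingale increment. I would show that the contracting part decays faster than $\gamma_n$ and bound the martingale term via a maximal inequality coupled with a Borel-Cantelli argument over geometrically spaced blocks; this is the standard machinery producing the $\sqrt{\beta_n\log \sum_{k\le n}\beta_k}$-rate.

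\textbf{Statement~\ref{st:xn.Conv}.} I would combine the previous two via two-timescale set-valued stochastic approximation. Since $\alpha_n/\beta_n\to 0$, $y$ lives on the fast scale and Statement~\ref{st:y_n.rate} controls its honest-coordinate error by $\gamma_n$. Rewriting the slow update as $x_{n+1} = x_n + \alpha_n(H_{n+1}+\zeta_{n+1})$ with $H_{n+1}$ in an $O(\gamma_n)$-neighborhood of $h(x_n)$ (the discrepancy on honest coordinates is absorbed because $\sum \alpha_n\gamma_n<\infty$, while malicious-coordinate effects already fit inside $\Lambda(x_n)$) and $\zeta_{n+1}$ a bounded martingale difference with $\sum\alpha_n^2<\infty$, the Benaim-Hofbauer-Sorin DI-tracking lemma together with Statement~\ref{st:GASE} delivers $x_n\to\mu$ a.s.---provided the iterates are almost surely bounded.

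\textbf{Main obstacle.} The crucial prerequisite is the a.s.\ boundedness of $(x_n)$: the crude estimate $\|x_{n+1}-x_n\|\le \alpha_n\max_i\|a_i\|$ only allows growth at rate $\sum_{k\le n}\alpha_k=\infty$, so a direct telescoping argument fails. Following the hint in the abstract, I would analyze $V(x_n)$ directly: conditioning on $\mathcal{F}_n$ and mimicking Statement~\ref{st:GASE}, derive
\[
\bE[V(x_{n+1})-V(x_n)\mid\mathcal{F}_n] \leq -c\,\alpha_n\,\eta(x_n) + O(\alpha_n^2 + \alpha_n\gamma_n)
\]
on the high-probability event from Statement~\ref{st:y_n.rate} with $\eta(x)>0$ outside a neighborhood of $\mu$, and combine this with a stopping-time construction plus the Robbins-Siegmund almost-supermartingale theorem to preclude escape to infinity. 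This is the most delicate step, since the sign discontinuity mirrors the DI drift only in conditional expectation rather than pathwise, so the bound must be propagated across the random times at which the adversarial coordinate is queried.
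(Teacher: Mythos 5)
Your proposal follows essentially the same route as the paper: the Lyapunov function $\|x-\mu\|^2$ with condition \eqref{e:FTD.Cond} for the GASE claim, a law-of-iterated-logarithm argument for the rate on $y_n$, a two-timescale set-valued stochastic-approximation theorem for the convergence of $x_n$, and a stopping time keyed to the event $\{\|y_n-\bE[Y]\|_{M^c}\leq r\gamma_n\}$ combined with a (super)martingale convergence argument for almost sure boundedness. The only cosmetic differences are that the paper invokes the two-timescale DI theorem of Yaji--Bhatnagar rather than a BHS-style tracking lemma, and that its boundedness proof drops the (pathwise, not merely conditional) negative drift term and applies nonnegative-martingale convergence directly where you would use Robbins--Siegmund; both variants go through.
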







The DI in \eqref{e:lim.DI} corresponds to the update rule of $x_n$ in \eqref{e:OL.Adversarial} with $y_n(i) \equiv \bE[Y(i)]$ for $i \in M^c,$ and the sign expression replaced with an arbitrary value in $[-1, +1]$, otherwise. Our first result states that every solution of this DI will converge to $\mu,$ irrespective of the sign choices made at the adversarial nodes (in a continuous time sense). The second statement provides the asymptotic rate at which $|y_n(i) - \bE[Y(i)]| \to 0,$ $i \in M^c,$ on every sample path. While this result assumes that the stepsizes are square-summable, it can be extended to cover the case of even non-square summable stepsizes; see \cite{thoppe2021law} for details. Our third and final result states that the actual $(x_n)$ iterates in \eqref{e:OL.Adversarial} also behave like the solutions of \eqref{e:lim.DI} and almost surely converge to $\mu.$ However, because the \sign\ function is not continuous, this is not a simple consequence of the first two statements. Instead, we have to rely on a more complex two-timescale DI analysis, and a  separate boundedness result for $(x_n)$ based on the theory of martingales and stopping times.






\subsection{Motivation for a DI-based Analysis}
\label{s:Intuition}

%
%
    


\begin{figure}
    \centering
    \includegraphics[scale=0.5]{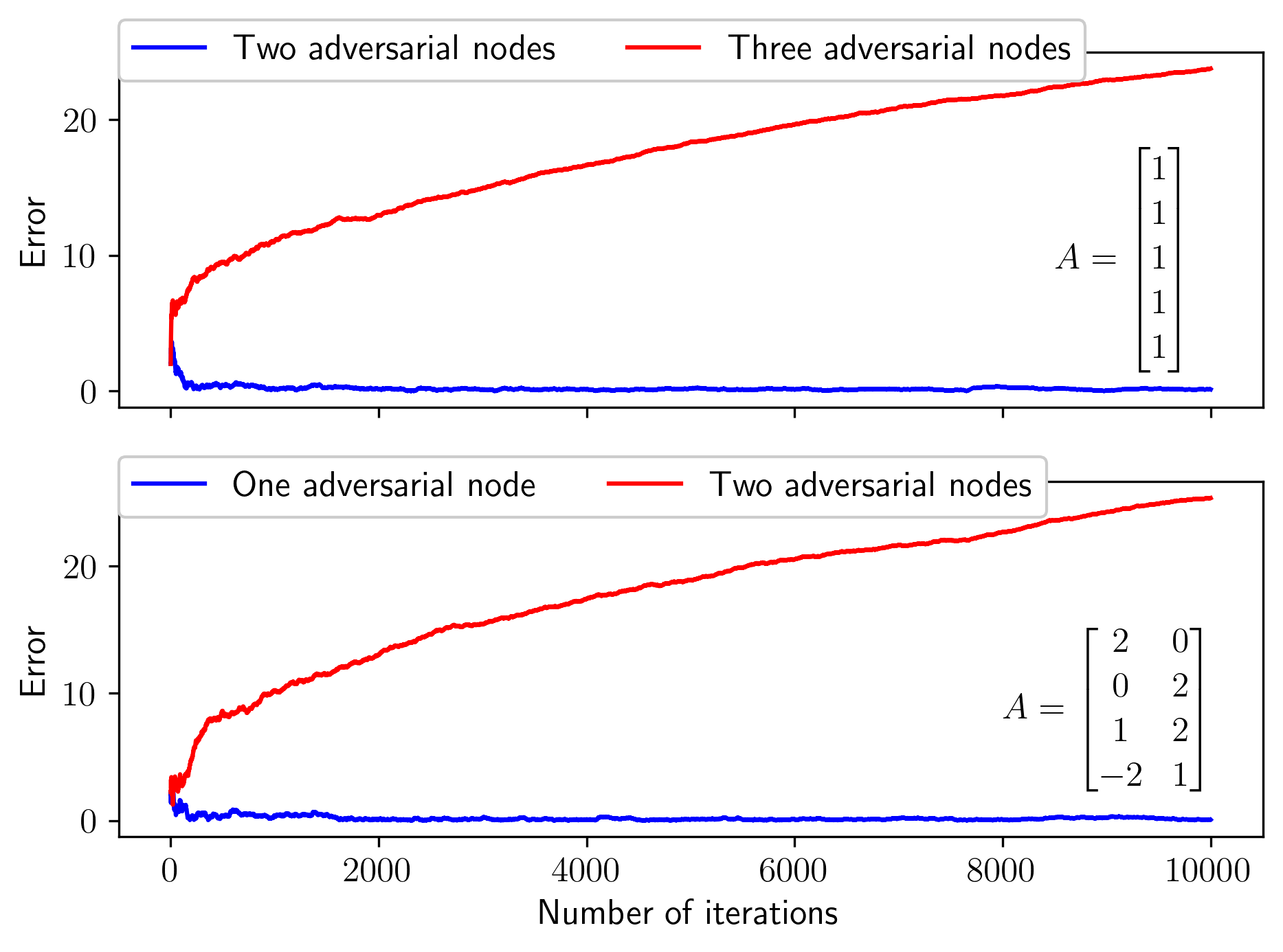}
    \caption{Error incurred by Algorithm~\ref{e:OL.Adversarial} ($\|x_n-\mu\|$) against the number of iterations ($n$). Within each subplot, the same measurement model is used with the only difference being the number of adversaries. The first subplot concerns the geometric median problem with $p=5$, while the second considers a generic matrix $A$ (see Section~\ref{s:Intuition}). 
    }
    \label{fig:ftd-eg}
\end{figure}



In this subsection, we give a simple example on why our algorithm will converge to $\mu$ even in the presence of adversarial measurements. We use a simplified set-up to illustrate 
the necessity of the DI analysis. 

Let $A$ be a vector of all ones. This implies that $\bE Y (i) = \mu\in\mathbb{R}$, for all $i$. Our problem setup then reduces to computing $x \in \bR$ that minimises $\sum_{i=1}^p |x-\bE Y (i)|$. The solution to this minimisation problem is called the geometric median \cite{wu2020federated}. 
Consider Algorithm \eqref{e:OL.Adversarial} in the deterministic setting, where all agents $i$ are given $\bE Y(i)$, instead of having to estimate it. Then, $y_n(i)$ will be $\mu$, for $i\in M^c,$ and any arbitrary value otherwise. It can be seen that the synchronous version of update \eqref{e:OL.Adversarial} can be written as
\begin{equation*}\label{eq: geom_median}
x_{n+1}=x_n+\alpha_n\Bigg[\underbrace{\sum_{i \in M^c} \lambda_n(i)}_{\text{Unperturbed subgradient }}+\sum_{i\in M}\underbrace{\lambda_n(i)}_{\text{Adversarial noise}}\Bigg],
\end{equation*}
where $\lambda_n(i) = \sign(\mu - x_n),$ if $i \in M^c$ and $x_n \neq \mu,$ and an arbitrary value in $[-1, 1],$ otherwise. Clearly, 
$\lambda_n(i)$ is the subgradient of $-|x-\bE Y (i)|$ when $i\in M^c$ and the perturbed subgradient given by the adversary, otherwise. The above update rule cannot be analysed using traditional ODE based approaches.  Firstly, the update can now take a set of values at each $x_n$. This is because $\lambda_n(i),$ $i \in M,$ can take any value in $[-1,1]$, regardless of $x_n$. Moreover, $\sign(\mu-x)$ is discontinuous at $x=\mu$, while ODE approaches require that this function be Lipschitz continuous.
Thus, the differential inclusion approach is preferred since it is capable of handling discontinuities and capturing the evolution of a set-valued map. The associated DI for the above update is given by:
\begin{equation*}
\dot x(t) \in \left\{|M^c|\sign(\mu-x)+\sum_{i\in M}v_i: v_i\in[-1,1] \right\},
\end{equation*}
when $x\neq \mu$ and 
\begin{equation*}
\dot x(t) \in \left\{\sum_{i= 1}^p v_i: v_i\in[-1,1] \right\},
\end{equation*}
when $x = \mu$.  The DI is modified at $x = \mu$ to make it continuous in a set-valued sense.

Note that if $|M^c|>|M|$ (equivalent to \eqref{e:FTD.Cond}), it follows that $\lim_{t\rightarrow +\infty} x(t) = \mu$. The intuition is as follows: if $\mu\neq x $, the sign of  $|M^c|\sign(\mu-x)+\sum_{i\in M}v_i$ will be always the same as the $\sign(\mu-x)$ and therefore the drift of the DI is controlled by the $\sign(\mu-x)$ and not by the adversaries. The performance of our algorithm for this problem with $p=5$ is shown in Figure~\ref{fig:ftd-eg}. Here, condition~\eqref{e:FTD.Cond} holds if $|M|=2$, but not when $|M|=3$. Consequently, our algorithm converges in the presence of two adversaries but diverges in the presence of three adversaries. 
 
More generally, condition~\eqref{e:FTD.Cond} is necessary and sufficient for our algorithm to converge. We emphasize that this condition is necessary even in the absence of noise and thus cannot be relaxed. A less obvious case where condition~\eqref{e:FTD.Cond} holds is given in the bottom subplot of Figure~\ref{fig:ftd-eg}. For the matrix $A$ in this example, the condition holds for $|M|=1$, but not when $|M|=2$.



\section{Proof of Theorem~\ref{thm:SRI.Main.Result}}
\label{sec:Proof}
We first discuss our proof strategy and then provide the details. Since  $y_n(i)$'s estimate for   $i \in M^c$ is not influenced by the $Y$ samples of other nodes, one would intuitively expect $\|y_n - \bE[Y]\|_{M^c} \to 0.$ Hence, \eqref{e:lim.DI} is the natural object for studying $(x_n)$'s behaviors. However, because the \sign\ function is discontinuous, $(x_n)$'s evolution cannot be  viewed as a simple perturbation of \eqref{e:lim.DI}'s solutions as in \cite[pg.~17]{borkar2009stochastic}. Instead, we rely on a two-timescale DI analysis  \cite{yaji2020stochastic}. Henceforth, $\|\cdot\|$ will denote the Euclidean norm. 

\subsection{Informal Outline of Two-timescale Analysis}
\label{s:Proof.Outline}
Our algorithm \eqref{e:OL.Adversarial} is of a two-timescale nature because $\alpha_n/\beta_n \to 0.$ Thus, the changes in the $x_n$ values eventually appear negligible compared to that of $y_n,$ which, in turn, implies $(x_n)$ and $(y_n)$'s behaviors can be studied in a decoupled fashion. Loosely, our analysis proceeds via the following prescribed steps from \cite{yaji2020stochastic}. 
\begin{enumerate}[leftmargin=*]
    \item $(y_n)$'s analysis: We set $x_n \equiv x$ for some arbitrary $x,$ and look at $y_n(i)$'s  evolution for $i \in M^c;$ we ignore what happens at the adversarial nodes. In our case, $y_n(i)$'s evolution is not influenced by the value of $x$ in any way. Further, its limiting ODE can be guessed to be $\dot{z}(t) = \frac{1}{p}(\bE[Y(i)] - z(t)).$ Since this scalar ODE is linear and has $\bE[Y(i)]$ as its unique GASE, it follows from a standard single-timescale stochastic approximation analysis \cite[Chapters~2 and 3]{borkar2009stochastic} that $|y_n(i) - \bE[Y(i)]| \to 0.$ 
  
    \item $(x_n)$'s analysis: From $(x_n)$'s perspective, $(y_n)$ would appear to have converged to its limit point. Accordingly, in $x_n$'s update rule, we now set $y_{n}(i) = \bE[Y(i)],$ for $i \in M^c,$ and allow for arbitrary values for adversarial $i$'s. This leads to the set-valued DI dynamics \eqref{e:lim.DI}. In the rest of this section, we formally prove that $\mu$ is its only attractor (Section~\ref{s:GASE}), the original $(x_n)$ sequence in \eqref{e:OL.Adversarial} is almost surely bounded (Section~\ref{s:Bounded.a.s.}), and it almost surely converges to $\mu$ (Section~\ref{s:rem.details}).
\end{enumerate}

\subsection{Analysis of the DI in \eqref{e:lim.DI}}
\label{s:GASE}
We first check that \eqref{e:lim.DI} is a well-defined DI. Recall that, for an (autonomous) ODE to be well-defined, one sufficient condition is that its driving function be Lipschitz continuous. In particular, this guarantees the existence and uniqueness of a solution for any initial point. Similarly, a DI is well-defined when its set-valued driving function $h$ is Marchaud, i.e., Lipschitz continuous in a set-valued sense (defined below). In general, solutions of a DI from a given starting point are not unique, but the above condition ensures existence. 

For $x \in \bR^d,$ let $Z(x):= M \cup \{i: a_i^T (x -\mu) =0 \}.$

\begin{lemma}
\label{lem:Marchaud}
The function $h$ defined in \eqref{e:x.Driving.Function} is Marchaud, i.e.,
    \begin{enumerate}
        \item $h(x)$ is convex and compact for all $x \in \bR^d;$
    
        \item $\exists K_h > 0$ such that, for all $x \in \bR^d,$ $
        \sup_{y \in h(x)} \|y\| \leq K_h(1 + \|x\|);$ and
    
        \item $h$ is upper semicontinuous or, equivalently,  $\{(x, \theta) \in \bR^d \times \bR^d: \theta \in h(x)\}$ is closed.
    \end{enumerate}
Hence, the DI in \eqref{e:lim.DI} is well-defined. 
\end{lemma}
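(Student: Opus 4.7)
I would verify the three Marchaud conditions in turn, exploiting the fact that each coordinate $\lambda_i$ in \eqref{e:x.Driving.Function} lives either in a singleton $\{\pm 1\}$ or in the full interval $[-1,+1]$, so that the constraint set $\Lambda(x) \subseteq [-1,+1]^p$ is always a Cartesian product of convex compact sets.

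First, I would observe that $h(x) = \frac{1}{p} A^{\tr} \Lambda(x)$, where $A$ is the matrix with rows $a_i^{\tr}$. Since $\Lambda(x)$ is a product of either singletons or intervals, it is convex and compact. As linear images preserve convexity and compactness, condition (1) follows immediately. For condition (2), since $|\lambda_i| \leq 1$ for every $(\lambda_1,\ldots,\lambda_p) \in \Lambda(x)$, the triangle inequality gives $\sup_{y \in h(x)} \|y\| \leq \frac{1}{p}\sum_{i=1}^p \|a_i\| =: K_h$, so the linear-growth bound holds trivially (and is in fact uniform in $x$).

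The main work is condition (3), the upper semicontinuity. The subtlety is that the $\sign$ function in the definition of $\Lambda(x)$ is discontinuous; however, the set-valued relaxation at the boundary is precisely designed to absorb this. I would argue by sequences: suppose $x_n \to x$ and $\theta_n \in h(x_n)$ with $\theta_n \to \theta$, and pick $\lambda^{(n)} \in \Lambda(x_n)$ with $\theta_n = \frac{1}{p}\sum_i a_i \lambda_i^{(n)}$. Since $\lambda^{(n)} \in [-1,+1]^p$, we can extract a subsequence along which $\lambda^{(n_k)} \to \lambda^*$, and by continuity of the linear map, $\theta = \frac{1}{p}\sum_i a_i \lambda_i^*$. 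It then suffices to check $\lambda^* \in \Lambda(x)$ coordinatewise. For $i \in M$, or for $i \in M^c$ with $a_i^{\tr} x = \bE[Y(i)]$, the condition $\lambda_i^* \in [-1,+1]$ is immediate. For $i \in M^c$ with $a_i^{\tr} x \neq \bE[Y(i)]$, continuity of $z \mapsto a_i^{\tr} z$ implies $a_i^{\tr} x_{n_k} \neq \bE[Y(i)]$ for all large $k$ and moreover $\sign(\bE[Y(i)] - a_i^{\tr} x_{n_k}) = \sign(\bE[Y(i)] - a_i^{\tr} x)$ eventually; hence $\lambda_i^{(n_k)}$ is eventually constant and equals the required sign, so $\lambda_i^* = \sign(\bE[Y(i)] - a_i^{\tr} x)$.

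Finally, the standard DI existence theory (e.g.\ \cite{borkar2009stochastic}) guarantees that a Marchaud set-valued map admits at least one absolutely continuous solution from every initial condition, so \eqref{e:lim.DI} is well-defined. The only step I expect to require any care is verifying upper semicontinuity at points $x$ where several coordinates satisfy $a_i^{\tr} x = \bE[Y(i)]$ simultaneously; the coordinatewise argument above handles this cleanly because the relaxation to $[-1,+1]$ already contains the limits of $\pm 1$-valued sign outputs from approaching $x_n$.
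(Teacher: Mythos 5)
Your proposal is correct and follows essentially the same route as the paper: the key step in both is that for $i \in M^c$ with $a_i^{\tr}x \neq \bE[Y(i)]$, the sign is locally constant near $x$, while the $[-1,+1]$ relaxation absorbs the degenerate coordinates. The paper packages this as the eventual containment $h(x_n) \subseteq h(x)$ and invokes closedness of $h(x)$, whereas you extract a convergent subsequence of the $\lambda^{(n)}$'s and verify $\lambda^* \in \Lambda(x)$ coordinatewise; these are minor variants of the same argument, and your explicit treatment of conditions (1) and (2) matches what the paper leaves as ``easy.''
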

\begin{proof}
The first two conditions are easy. For $h$'s upper semi-continuity, it suffices to check if $(x_n)$ and $(\theta_n)$ are such that $x_n \to x,$ $\theta_n \in h(x_n)$ $\forall n,$ and $\theta_n \to \theta,$ then $\theta \in h(x).$

For $i \in Z(x)^c$, $a_i^T (x -\mu)$ is either $>0$ or $<0.$ This fact along with $x_n \to x$ then implies $\exists n_0 \geq 0$ such that, for  $n \geq n_0,$ we have $\sign(a_i^T (x_n -\mu))=\sign(a_i^T (x -\mu))$ for all $i \in Z(x)^c$ and, hence, $Z(x)^c \subseteq Z(x_n)^c.$
%
%
%
Thus, $h(x_n) \subseteq h(x)$ for all $n \geq n_0,$ which implies  $(\theta_n)_{n \geq n_0} \subseteq h(x).$ The desired result now follows since $h(x)$ is compact. \hfill \qedsymbol
%
\end{proof}

We now show that $\mu$ is \eqref{e:lim.DI}'s unique GASE.

\hspace{2ex} \textit{Proof of Statement~\ref{st:GASE}, Theorem~\ref{thm:SRI.Main.Result}:}
It suffices to show that $V(x) = \frac{1}{2}\|x - \mu\|^2$ is a Lyapunov function \cite{benaim2005stochastic} for the DI in \eqref{e:lim.DI} with respect to $\{\mu\}.$  Clearly, $V(x) = 0$ if and only if $x = \mu.$ Further, for any $x \neq \mu$ and $\theta \equiv \frac{1}{p} \sum_{i = 1}^p a_i \lambda_i \in h(x),$
\begin{align}
    \nabla V&(x)^\tr \theta \nonumber\\
    = {} &  \frac{1}{p} \sum_{i = 1}^p \lambda_i a_i^\tr (x - \mu) \nonumber 
    \\
    = {} & \frac{1}{p} \left[ -\sum_{i \in M^c} |a_i^\tr (x - \mu)| + \sum_{i \in M} \lambda_i a_i^\tr (x - \mu)  \right] \label{e:Non.Adv.lambda.i} \\
    \leq {} & \frac{1}{p} \left[ -\sum_{i \in M^c} |a_i^\tr (x - \mu)| + \sum_{i \in M} |a_i^\tr (x - \mu)|  \right] \label{e:Adv.lambda.i.Bd} \\
    < {} & 0, \label{e:Ly.negative}
\end{align}
where 
\eqref{e:Non.Adv.lambda.i} holds since $\lambda_i a_i^\tr (x - \mu) = -|a_i^\tr(x - \mu)|$ for $i \in M^c,$  \eqref{e:Adv.lambda.i.Bd} is true because $r \leq |r|$ for any $r \in \bR$ and $|\lambda_i| \leq 1,$ while \eqref{e:Ly.negative} follows from \eqref{e:FTD.Cond} since $|M| \leq m$.

The claim now follows from \cite[Proposition~3.25]{benaim2005stochastic}. \hfill \qedsymbol

\subsection{Almost Sure Boundedness of $(x_n)$}
\label{s:Bounded.a.s.}
We use  martingale and stopping time theory to show that $(x_n)$ obtained using \eqref{e:OL.Adversarial} is almost surely bounded. 

Our proof needs a few intermediate results.  In relation to $(x_n)$ and $(y_n)$ in \eqref{e:OL.Adversarial}, define the following. For $n \geq 0,$ let
\begin{multline}
    b_n = \frac{1}{p} \sum_{i \in M^c} a_i [\sign(y_n(i) - a_i^\tr x_n) \\ -  \sign(\bE[Y](i) - a_i^\tr x_n)],
\end{multline} 
\begin{multline*}
g(x_n, y_n) = \frac{1}{p} \sum_{i \in M^c} a_i \sign(\bE[Y](i) - a_i^\tr x_n) \\ + \frac{1}{p} \sum_{i \in M} a_i \sign(y_n(i) - a_i^\tr x_n),
\end{multline*}
and
\begin{multline}
\label{e:Noise.Defn}
    M_{n + 1} = a_{i_{n + 1}}^\tr [\sign(y_n(i_{n + 1}) - a_{i_{n + 1}}^\tr x_n] \\ - g(x_n, y_n) - b_n.
\end{multline}
In the above terms, the update rule in \eqref{e:OL.Adversarial} can be written as 
\begin{equation}
\label{e:OL.Adversarial.Analytic.Form}
    x_{n + 1} = x_n + \alpha_n[g(x_n, y_n) + b_n + M_{n + 1}].
\end{equation}
Note that $g(x_n, y_n) \in h(x_n).$ Therefore, one can view $g(x_n, y_n)$ as the update direction that is prescribed by \eqref{e:lim.DI}, $b_n$ as a perturbation that arises since, for $i \in M^c,$ $y_n(i) \neq \bE[Y(i)]$ a.s. for any finite $n,$ and $M_{n + 1}$ as the noise.

\begin{lemma}
\label{lem:Int.Results}
The following statements are true.
\begin{enumerate}
    \item For $x \in \bR^d,$ let $\phi(x) = \frac{1}{p}\sum\limits_{i \in M^c} |a_i^\tr x| - \frac{1}{p}\sum\limits_{i \in M} |a_i^\tr x|.$ Then there exists  $\eta > 0$ such that $\phi(x) \geq \eta \|x\|$ \, $\forall x.$

    \item $|(x_n - \mu)^\tr b_n| \leq \frac{2\sqrt{|M^c|}}{p} \|y_n - \bE[Y]\|_{M^c}.$
    
    \item $(x - \mu)^\tr \theta \leq - \eta \|x - \mu\|$ for any $\theta \in h(x).$

    \item \label{st:x_n+1.x_n.Rel} Let $C_M := \sup_{1 \leq i \leq p} \|a_i\|.$ Then, for any $n \geq 0,$
    \begin{multline*}
        \|x_{n + 1} - \xS\|^2 \leq \|x_0 - \xS\|^2  + \sum_{k = 0}^n \alpha_k (x_k - \xS)^\tr M_{k + 1}\\ + \frac{2}{p} \sum_{k = 0}^n \alpha_k \|y_k - \bE[Y]\|_{M^c} + C_M^2 \sum_{k = 0}^n \alpha_k^2.
    \end{multline*}

\end{enumerate}
\end{lemma}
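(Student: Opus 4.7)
I would prove the four sub-claims in order; the first two carry the substance and the last two follow from them together with the identities behind $g(x_n,y_n)$, $b_n$, and $M_{n+1}$. For Statement~1, the key point is that $\phi$ is continuous and positively $1$-homogeneous, i.e., $\phi(tx) = t\phi(x)$ for $t>0$. Condition~\eqref{e:FTD.Cond} implies $\phi(x) > 0$ on $\bR^d \setminus \{0\}$: for any such $x$, pick $K \supseteq M$ with $|K| = m$, so $K^c \subseteq M^c$ and $M \subseteq K$ give $\sum_{i \in M^c}|a_i^\tr x| \geq \sum_{i \in K^c}|a_i^\tr x| > \sum_{i \in K}|a_i^\tr x| \geq \sum_{i \in M}|a_i^\tr x|$. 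I would then set $\eta := \min_{\|x\|=1}\phi(x)$, which is strictly positive and attained by compactness of the sphere, and extend by homogeneity to $\phi(x) \geq \eta\|x\|$ everywhere.

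Statement~2 is the main obstacle. My plan rests on the identity $a_i^\tr \mu = \bE[Y(i)]$, which holds because $Y(i) = a_i^\tr X$. With it, the difference $\sign(y_n(i) - a_i^\tr x_n) - \sign(\bE[Y(i)] - a_i^\tr x_n)$ vanishes unless $a_i^\tr x_n$ lies in the closed interval between $y_n(i)$ and $\bE[Y(i)]$; in that case $|a_i^\tr(x_n - \mu)| = |a_i^\tr x_n - \bE[Y(i)]| \leq |y_n(i) - \bE[Y(i)]|$. Bounding each sign difference by $2$ in absolute value on the set where it is nonzero, and then applying Cauchy--Schwarz on the $|M^c|$-term sum, produces the claimed bound $\frac{2\sqrt{|M^c|}}{p}\|y_n - \bE[Y]\|_{M^c}$.

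For Statement~3, I would expand $\theta = \frac{1}{p}\sum_i a_i \lambda_i$ and split by agent class. For $i \in M^c$ with $a_i^\tr(x-\mu) \neq 0$, the definition of $\Lambda(x)$ forces $\lambda_i = \sign(\bE[Y(i)] - a_i^\tr x) = -\sign(a_i^\tr(x-\mu))$, so $\lambda_i a_i^\tr(x-\mu) = -|a_i^\tr(x-\mu)|$; indices in $M^c$ with $a_i^\tr(x-\mu) = 0$ contribute $0$; for $i \in M$, $|\lambda_i| \leq 1$ gives $\lambda_i a_i^\tr(x-\mu) \leq |a_i^\tr(x-\mu)|$. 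Summing and invoking Statement~1 yields $(x-\mu)^\tr \theta \leq -\phi(x-\mu) \leq -\eta\|x-\mu\|$. For Statement~4, I would rewrite the update via \eqref{e:OL.Adversarial.Analytic.Form} as $x_{n+1} - \mu = (x_n - \mu) + \alpha_n v_n$ with $v_n := g(x_n,y_n) + b_n + M_{n+1}$; by \eqref{e:Noise.Defn} this equals $a_{i_{n+1}}\sign(y_n(i_{n+1}) - a_{i_{n+1}}^\tr x_n)$, so $\|v_n\| \leq C_M$. Expanding $\|x_{n+1}-\mu\|^2$, the $g$-piece of the cross-term is nonpositive by Statement~3 and is discarded, the $b_n$-piece is controlled by Statement~2, the $M_{n+1}$-piece is retained as the martingale-like term, and the quadratic term contributes at most $\alpha_n^2 C_M^2$. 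Telescoping the resulting one-step inequality from $0$ to $n$ delivers the stated bound.
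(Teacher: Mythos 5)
Your proposal is correct and follows essentially the same route as the paper's proof: homogeneity plus compactness of the unit sphere for Statement 1, the observation that the sign difference is nonzero only when $a_i^\tr x_n$ lies between $y_n(i)$ and $\bE[Y(i)]$ (so that $|a_i^\tr(x_n-\mu)| \leq |y_n(i)-\bE[Y(i)]|$) followed by the $\ell_1$--$\ell_2$ comparison for Statement 2, the same splitting over $M$ and $M^c$ for Statement 3, and the same expand-bound-telescope argument for Statement 4. The only discrepancies are constant factors (a factor of $2$ on the martingale term and a $\sqrt{|M^c|}$ on the $y$-error term) that are already inconsistent within the paper itself, so nothing to fix on your end.
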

\begin{proof}
The first statement is trivially true for $x = 0.$ Hence, suppose $x \neq 0.$ It suffices to show that $\exists \, \eta > 0$ such that $\phi(x) \geq \eta$ for any $x$ with unit norm. However, this holds since (a) $\phi$ is continuous and $\{x \in \bR^d: \|x\| = 1\}$ is a compact set: thus, $\phi$ attains its minimum; and (b)  $\phi(x) > 0$ for any $x \neq \mu$ on account of  \eqref{e:FTD.Cond}.

For the second statement, note that
\[
    |\sign(r_1 - r_0) - \sign(r_2 - r_0)| \leq 2 \delta_{|r_1 - r_2| \geq |r_0 - r_2|}
\]
for any $r_0, r_1, r_2 \in \bR,$ where $\delta$ denotes the indicator function. Combining this with the fact that $\bE[Y(i)] = a_i^\tr \mu,$ for $i \in M^c,$ gives
\begin{align*}
    |(x_n & - \mu)^\tr b_n|  \\
    \leq {} & \frac{2}{p} \sum_{i \in M^c}  |a_i^\tr (x_n - \mu)| \indc_{|y_n(i) - \bE[Y(i)]| \geq |a_i^\tr x_n - a_i^\tr \mu|} \\
    \leq {} & \frac{2}{p} \sum_{i \in M^c} |y_n(i) - \bE[Y(i)]| \indc_{|y_n(i) - \bE[Y(i)]| \geq |a_i^\tr x_n - a_i^\tr \mu|} \\
    %
    %
    \leq {} & \frac{2}{p} \sum_{i \in M^c} |y_n(i) - \bE Y(i)| \\
    \leq {} & \frac{2 \sqrt{|M^c|}}{p} \|y_n - \bE[Y]\|_{M^c},
\end{align*}
as desired. 

We now discuss the third statement. Let $\theta \in h(x)$ be arbitrary. Then, 
\begin{align*}
    (x - & \mu)^\tr \theta \\
    \leq {} & \frac{1}{p} \left[- \sum_{i \in M^c} |a_i^\tr (x - \mu)| + \sum_{i \in M} |a_i^\tr (x - \mu)| \right]\\
    \leq {} & - \phi(x - \mu),
\end{align*}
where the first relation follows as in \eqref{e:Adv.lambda.i.Bd}, and the second relation holds from $\phi$'s definition. The claim now follows from our first statement above.



Finally, we derive the fourth statement. From \eqref{e:OL.Adversarial.Analytic.Form}, 
\begin{multline*}
    \|x_{n + 1} - \mu\|^2 = \|x_n - \mu\|^2 + \alpha_n^2 \|g(x_n, y_n) + b_n + M_{n + 1}\|^2 \\
    + 2 \alpha_n (x_n - \mu)^\tr [g(x_n, y_n)  + b_n + M_{n + 1}].
\end{multline*}
Statement 3 along with the fact that $g(x_n, y_n) \in h(x_n)$ shows $(x_n - \mu)^\tr g(x_n, y_n) \leq - \eta \|x_n - \mu\|,$ while Statement 2 gives the bound on $(x_n - \mu)^\tr b_n.$ Separately, $\|g(x_n, y_n) + b_n + M_{n + 1}\| = \|a_{i_{n + 1}}\| \leq C_M.$ It now follows that
\begin{multline*}
        \|x_{n + 1} - \xS\|^2 \leq \|x_n - \xS\|^2 - \alpha_n \eta \|x_n - \xS\| \\ + \frac{2 \sqrt{|M^c|} \alpha_n}{p} \|y_n - \bE[Y]\|_{M^c} + \alpha_n (x_n - \xS)^\tr M_{n + 1} + C_M^2 \alpha_n^2.
    \end{multline*}

The desired claim is now easy to see. \hfill 
\qedsymbol
\end{proof}

Presuming Statement~\ref{st:y_n.rate} in Theorem~\ref{thm:SRI.Main.Result} holds, we are now ready to show that $(x_n)$ is bounded almost surely, 

\begin{proposition}
\label{prop:stability}
$\sup\limits_{n \geq 0} \| x_n \| < \infty$ a.s.
\end{proposition}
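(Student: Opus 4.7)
The plan is to leverage the accumulated bound in Statement~\ref{st:x_n+1.x_n.Rel} of Lemma~\ref{lem:Int.Results} via a martingale-plus-stopping-time localization argument on $V_n := \|x_n - \xS\|^2$. Setting $N_n := \sum_{k=0}^{n-1} \alpha_k (x_k - \xS)^\tr M_{k+1}$ and $E_n := \frac{2\sqrt{|M^c|}}{p}\sum_{k=0}^n \alpha_k \|y_k - \bE[Y]\|_{M^c} + C_M^2 \sum_{k=0}^n \alpha_k^2$, that inequality becomes
\[
V_{n+1} \leq V_0 + N_{n+1} + E_n \qquad \text{for all } n \geq 0,
\]
and the task reduces to showing that the right-hand side remains finite almost surely.

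First, I would verify the two ingredients needed to control the two random summands. For $(N_n)$: a direct calculation using the uniform distribution of $i_{n+1}$ on $\{1,\dots,p\}$ yields $\bE[a_{i_{n+1}} \sign(y_n(i_{n+1}) - a_{i_{n+1}}^\tr x_n) \mid \mathcal{F}_n] = g(x_n,y_n) + b_n$, where $(\mathcal{F}_n)$ is the natural filtration generated by the iterates; hence $\bE[M_{n+1} \mid \mathcal{F}_n] = 0$, so $(N_n)$ is a martingale, and $\|M_{n+1}\|$ is deterministically bounded by a constant multiple of $C_M$. For $(E_n)$: Statement~\ref{st:y_n.rate} of Theorem~\ref{thm:SRI.Main.Result} yields $\|y_n - \bE[Y]\|_{M^c} \leq (1+\Lambda)\gamma_n$ eventually a.s., and the stepsize conditions $\sum_n \alpha_n \gamma_n < \infty$ and $\sum_n \alpha_n^2 < \infty$ then force $E_\infty := \lim_n E_n < \infty$ a.s.

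The key step is the stopping-time localization. For each $R > 0$, set $\tau_R := \inf\{n \geq 0 : V_n > R^2\}$ (with $\inf \emptyset := \infty$) and let $N^R_n := N_{n \wedge \tau_R}$. On $\{k < \tau_R\}$ one has $\|x_k - \xS\| \leq R$, so the predictable quadratic variation of $N^R$ is deterministically bounded by a constant multiple of $R^2 \sum_k \alpha_k^2 < \infty$. Doob's $L^2$-maximal inequality combined with Markov's inequality then gives $\bP(\sup_n |N^R_n| > R^2/2) = O(1/R^2) \to 0$ as $R \to \infty$. On the good event $G_R := \{V_0 + E_\infty + \sup_n |N^R_n| \leq R^2\}$, which has $\bP(G_R) \to 1$, the bound $V_m \leq V_0 + |N^R_m| + E_\infty \leq R^2$ holds for every $m \leq \tau_R$; this contradicts $V_{\tau_R} > R^2$ unless $\tau_R = \infty$. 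Hence $\bP(\sup_n V_n \leq R^2) \to 1$, giving $\sup_n V_n < \infty$ a.s., and therefore $\sup_n \|x_n\| < \infty$ a.s.

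The main obstacle is the self-referential coupling between the martingale noise---whose per-step magnitude scales with $\|x_k - \xS\|$---and the iterate norm itself, which makes it circular to apply a martingale convergence theorem to $N_n$ directly. Stopping at $\tau_R$ severs this coupling by enforcing $\|x_k - \xS\| \leq R$ on $\{k < \tau_R\}$, and the square-summability of $(\alpha_n)$ then makes the stopped martingale of smaller order than $R^2$, which closes the self-consistency loop.
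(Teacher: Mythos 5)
Your proof is correct, and while it shares the paper's overall skeleton---start from the accumulated inequality in Statement~\ref{st:x_n+1.x_n.Rel} of Lemma~\ref{lem:Int.Results}, localize with a stopping time, invoke Statement~\ref{st:y_n.rate} of Theorem~\ref{thm:SRI.Main.Result} to make $\sum_k \alpha_k \|y_k - \bE[Y]\|_{M^c}$ finite---it diverges from the paper in two substantive ways. The paper stops on the \emph{slow-coordinate error}, $T(r) = \inf\{n : \|y_n - \bE[Y]\|_{M^c}/\gamma_n > r\}$, whereas you stop on the iterate itself, $\tau_R = \inf\{n : \|x_n - \mu\|^2 > R^2\}$. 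More importantly, the paper needs no quadratic-variation or $L^2$ estimate at all: after stopping at $T(r)$, the process $\Sr_n = \|x_0-\mu\|^2 + 2\sum_{k} \alpha_k (x_k-\mu)^\tr M_{k+1} + \Cr$ dominates $\|\xr_n - \mu\|^2 \geq 0$, hence is a \emph{nonnegative} martingale and converges a.s.\ by the martingale convergence theorem---nonnegativity, not an increment bound, is what closes the ``self-referential coupling'' you identify as the main obstacle. Your route instead pays for that coupling with Doob's $L^2$ maximal inequality on the $\tau_R$-stopped martingale, getting $\bP(\sup_n |N^R_n| > R^2/2) = O(1/R^2)$, and then passes to the a.s.\ statement via the monotone union of the events $\{\sup_n V_n \leq R^2\}$; this is valid and self-contained, at the cost of slightly more machinery. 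One small thing each approach buys: yours uses Statement~\ref{st:y_n.rate} only through the a.s.\ finiteness of $E_\infty$, while the paper's union over $r$ of $\{\sup_n \|y_n - \bE[Y]\|_{M^c}/\gamma_n \leq r\}$ uses it in essentially the same strength, so neither is more demanding on the $y$-analysis; but the paper's nonnegative-martingale trick is the leaner of the two and is worth internalizing as the standard way to stabilize iterates whose noise scales with the iterate norm.
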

\begin{proof}
Let $(\gamma_n)$ be as in Theorem~\ref{thm:SRI.Main.Result}. Fix an arbitrary integer $r \geq 1,$ and let $\Cr:= \frac{2r \sqrt{|M^c|}}{p}  \sum_{k = 0}^\infty \alpha_k \gamma_k + C_M^2 \sum_{k = 0}^\infty \alpha^2_k < \infty,$ and $T(r)$ be the stopping time $\inf\left\{n \geq 0: \frac{1}{\gamma_n} \|y_n - \bE[Y]\|_{M^c} > r\right\}.$  Next, for $n \geq 0,$ let
\[
    S_n = \|x_0 - \mu\|^2 + 2 \sum_{k = 0}^{n - 1} \alpha_k (x_k - \mu)^\tr M_{k + 1} + C_r.
\]
Clearly, $(S_n)$ and, hence, $(\Sr_n) \equiv (S_{n \wedge T(r)})$ is a martingale. 

Let $(\xr_n) \equiv (x_{n \wedge T(r)}).$ Then Statement 4 of Lemma~\ref{lem:Int.Results} shows $\|\xr_n - \mu\|^2 \leq \Sr_n$ $\forall \, n \geq 0.$ This implies $(\Sr_n)$ is a non-negative martingale and, hence, converges almost surely. Therefore, $(\xr_n)$ is bounded almost surely. 

Finally, note that 
\begin{align}
    E := {} & \bigg\{\sup_{n \geq 0} \|x_n\| = \infty\bigg\} \nonumber \\
    {} & \hspace{4em} \cap \left[ \bigcup_{r = 1}^{\infty} \bigg\{ \sup_{n \geq 0} \frac{\|y_n - \bE[Y]\|_{M^c}}{\gamma_n} \leq r \bigg\}\right] \nonumber \\
    = {} & \bigcup_{r = 1}^{\infty} \left\{\sup_{n \geq 0}\|\xr_n\| = \infty, \sup_{n \geq 0} \frac{\|y_n - \bE[Y]\|_{M^c}}{\gamma_n} \leq r \right\} \label{e:x_n.xr_n.s.swap} \\
    \subseteq {} & \bigcup_{r = 1}^{\infty} \bigg\{\sup_{n \geq 0}\|\xr_n\| = \infty \bigg\},\nonumber
\end{align}
where \eqref{e:x_n.xr_n.s.swap} follows from the fact that $\sup_{n \geq 0} \frac{\|y_n - \bE[Y]\|_{M^c}}{\gamma_n} \leq r$ implies $x_n = \xr_n$ for all $n.$ Since $(\xr_n)$ is almost surely bounded for any $r \geq 1,$ we get  $\Pr(E) = 0.$ From Statement~\ref{st:y_n.rate} in Theorem~\ref{thm:SRI.Main.Result}, we also have that 
\[
    \Pr\left(\bigcup_{r = 1}^\infty\left\{ \sup_{n \geq 0}  \frac{\|y_n - \bE [Y]\|_{M^c}}{\gamma_n}  \leq r\right\}\right) = 1.
\]
The desired claim now follows since, for any events $E_1$ and $E_2,$ $\Pr(E_1) = 1$ and $\Pr(E_2^c \cap E_1) = 0$ imply $\Pr(E_2) = 1.$
\hfill \qedsymbol
\end{proof}

\subsection{Rest of the Proof}
\label{s:rem.details}
In this section, we discuss the proofs of Statements~\ref{st:y_n.rate} and \ref{st:xn.Conv} of Theorem~\ref{thm:SRI.Main.Result}. 

Statement~\ref{st:y_n.rate} follows from \cite[Theorem~1]{pelletier1998almost}, which provides a law of iterated logarithm type result for generic stochastic approximation algorithms. That work assumes that the iterates almost surely converge, but this can be shown using the results in \cite[Chapters 2 and 3]{borkar2009stochastic}, as discussed in Section~\ref{s:Proof.Outline}.

To prove Statement~\ref{st:xn.Conv}, we rely on \cite[Theorem~4]{yaji2020stochastic}, which looks at convergence of generic two-timescale algorithms with set-valued limiting dynamics. Specifically, this latter result assumes $(x_n)$'s limiting DI has a global attractor (see A10 there), and states that, if ten other conditions (labelled A1 - A9 and A11 there) hold, then $x_n$ converges to this global attractor a.s. These ten conditions concern the behaviors of $x_n$ and $y_n$'s driving functions, stepsizes, and noise. Below we provide a brief commentary on why these assumptions hold for \eqref{e:OL.Adversarial}. The reader should note that the role of $x_n$ and $y_n$ is flipped in \cite{yaji2020stochastic}: the changes in $y_n$ eventually appear negligible compared to that of $x_n.$ The analysis there also accounts for Markov noise, but it can ignored using the approach suggested in Remark 3 there. Finally, for all of $(y_n)$'s analysis below, we ignore the evolution at adversarial nodes: instead, we account for them directly in the definition of the DI in \eqref{e:lim.DI}. 

Assumptions A1 and A2 of \cite{yaji2020stochastic} hold when the limiting DIs associated with $x_n$ and $y_n$ are Marchaud. For \eqref{e:OL.Adversarial}, this can be established like in the proof of our Lemma~\ref{lem:Marchaud}. Assumptions A3 and A4 concern Markov noise and, hence, trivially hold true in our case. Assumption A5 is on stepsizes and it holds in our case because we also assume those conditions. Assumption A8 there holds if the $(x_n)$ and $(y_n)$ iterates are bounded almost surely. Proposition~\ref{prop:stability} here proves it for $(x_n),$ while, for $(y_n),$ it follows easily from \cite[Chapter~3, Theorem~7]{borkar2009stochastic} due to its linear nature. Assumptions A6 and A7 hold if the contributions of the additive noise terms are eventually negligible. This can be established as in \cite[Chapter~2, (2.19)]{borkar2009stochastic}, which holds in our case because our iterates are bounded a.s. and the noise growth rate condition of (2.13)  trivially holds in our context. Assumptions A9 and A11 hold, if for each fixed $x,$ the limiting DI for $(y_n)$ has a unique GASE. As discussed in Section~\ref{s:Proof.Outline}, in our case, the dynamics of $(y_n)$ is not influenced by the value of $x$ and $\{\bE[Y(i)]: i \in M^c\}$ is the global attractor for any $x.$ Finally, Assumption A10 requires that $(x_n)$'s limiting DI has a unique global attractor. We established this in Statement~\ref{st:GASE} of our Theorem~\ref{thm:SRI.Main.Result}.

\section{Discussions and Future Directions}
In this work, we developed a fully-asynchronous algorithm for mean estimation in the presence of adversaries. Thereafter, we developed a novel DI-based two-timescale analysis to rigorously show its a.s. convergence. We now discuss some simple extensions of our work, where we can relax certain assumptions.

\textbf{Non-zero kernel:} The condition \eqref{e:FTD.Cond} fails for all matrices $A$ with a non-zero kernel. Thus, Theorem~\ref{thm:SRI.Main.Result} cannot be used for fat matrices or tall matrices with non full rank. However, we can obtain a similar result by relaxing condition \eqref{e:FTD.Cond} to hold only for points outside the kernel of $A$. Note that in this case, there are several $x \in \bR ^d$ such that $Ax=\bE Y$. Under this modified assumption, it can be shown that the DI always converges to one such point. To see this, the function $\frac{1}{2}\|x-\mu\|_2^2$, with $\mu$ as solution of $Ax=\bE Y$, would remain a Lyapunov function in this case. Applying a variant of LaSalle's invariance theorem would then give us that the DI converges to an invariant subset of $\{x:Ax=\bE Y\}$.

\textbf{Perturbed samples:} Suppose that, instead of being provided samples of $Y(i)=a_i^\tr X$, we only have access to samples of form $Y(i)=a_i^\tr X+b(i)$, where $b(i)$ is some random or deterministic perturbation. The only condition imposed on $b(i)$ is that its magnitude remains bounded by some constant $B$ for each $i$. We can extend the result in Theorem \ref{thm:SRI.Main.Result} to this setting using similar arguments as discussed in the previous case. However, the Lyapunov function would need to be re-defined and may have discontinuous derivatives.    

\bibliographystyle{splncs04} 
\bibliography{references.bib}

\end{document}